\let\emptyset\varnothing
\newcommand{\marginsubsection}[1]{%
    \normalmarginpar\leavevmode\marginpar{\raggedleft \textbf{#1}}%
}
\newcommand{\revmarginsubsection}[1]{%
    \reversemarginpar\leavevmode\marginpar{\raggedleft \textbf{#1}}%
}
\begin{document}

\title{Explainable Evidential Clustering}
\author{Victor F. Lopes de Souza\inst{1, 2, 3, 4} \and
Karima Bakhti\inst{1, 4} \and
Sofiane Ramdani\inst{3} \and \\
Denis Mottet\inst{1} \and
Abdelhak Imoussaten\inst{1, 2}
}

\authorrunning{Lopes de Souza et al.}

\institute{Euromov Digital Health in Motion, \\ Univ. Montpellier, IMT Mines Alès, Montpellier, France\and
Euromov Digital Health in Motion, \\ Univ. Montpellier, IMT Mines Alès, Alès, France
\and
LIRMM, Univ. Montpellier, CNRS, Montpellier, France \and
CHU, Univ. Montpellier, Euromov DHM, Montpellier, France \\ ~ \\
\email{victor-fernando.lopes-de-souza@umontpellier.fr}
}
\maketitle


\begin{abstract}
Unsupervised classification is a fundamental machine learning problem. Real-world data often contain imperfections, characterized by uncertainty and imprecision, which are not well handled by traditional methods. Evidential clustering, based on Dempster-Shafer theory, addresses these challenges. This paper explores the underexplored problem of explaining evidential clustering results, which is crucial for high-stakes domains such as healthcare. Our analysis shows that, in the general case, representativity is a necessary and sufficient condition for decision trees to serve as abductive explainers. Building on the concept of representativity, we generalize this idea to accommodate partial labeling through utility functions. These functions enable the representation of "tolerable" mistakes, leading to the definition of evidential mistakeness as explanation cost and the construction of explainers tailored to evidential classifiers. Finally, we propose the Iterative Evidential Mistake Minimization (IEMM) algorithm, which provides interpretable and cautious decision tree explanations for evidential clustering functions. We validate the proposed algorithm on synthetic and real-world data. Taking into account the decision-maker's preferences, we were able to provide an explanation that was satisfactory up to 93\% of the time.

\keywords{Explainability \and Cautiousness \and Unsupervised Classification.}
\end{abstract}

\section{Introduction}\label{sec:intro}

\textbf{Clustering} is a fundamental machine learning problem \cite{macqueenMethodsClassificationAnalysis1967} that aims to group similar objects while distinguishing different ones \cite{hansenClusterAnalysisMathematical1997}. As a core data analysis task, it reveals patterns and enables applications such as data compression, summarization, visualization, and anomaly detection \cite{xuSurveyClusteringAlgorithms2005}. Regarding clustering, two major challenges persist: \textbf{imperfection of input data} \cite{duboisRepresentationsFormellesLincertain2006} and \textbf{interpretability} \cite{carvalhoMachineLearningInterpretability2019}.

Real-world scenarios with imperfect data require \textbf{cautiousness} \cite{bengsPitfallsEpistemicUncertainty2022,angelopoulosUncertaintySetsImage2022,imoussatenCautiousClassificationBased2022,hullermeierQuantificationCredalUncertainty2022,nguyenReliableMulticlassClassification2018}, defined  as decision-makers' awareness of model limitations and resulting risk-aversion. Effective cautiousness depends on properly characterizing these imperfections, primarily \textbf{uncertainty} and \textbf{imprecision} \cite{duboisRepresentationsFormellesLincertain2006}. Approaches addressing these issues include imprecise probability theory \cite{walleyStatisticalReasoningImprecise1991}, possibility theory \cite{duboisRepresentationCombinationUncertainty1988}, rough sets \cite{pawlakRoughSets1982}, fuzzy sets \cite{zadehFuzzySets1965}, and Dempster-Shafer belief functions \cite{shaferMathematicalTheoryEvidence1976}.

These foundations have given rise to various clustering methods, including fuzzy \cite{ruspiniNewApproachClustering1969}, possibilistic \cite{krishnapuramPossibilisticApproachClustering1993}, rough \cite{lingrasIntervalSetClustering2004}, and evidential clustering \cite{massonECMEvidentialVersion2008}. Regarding the latter, while classical hard clustering \cite{hartiganAlgorithm136KMeans1979} assigns each point to exactly one cluster, evidential clustering induces a credal partition \cite{massonECMEvidentialVersion2008} that represents both uncertainty and imprecision through partial membership across multiple cluster combinations.

Interpreting clustering results is equally critical. Without proper interpretation, clustering outcomes often lack practical utility, which has led to significant research in developing explainable clustering methods \cite{moshkovitzExplainableKMeansKMedians2020,bandyapadhyayHowFindGood2023,ben-hurSupportVectorClustering2001,ellisAlgorithmAgnosticExplainabilityUnsupervised2021,tutayClusterExplorerInteractiveFramework2023,ellisExplainableFuzzyClustering2024}. \textbf{Explainability} represents a rapidly growing field in machine learning, referring to a model's ability to provide details and justifications for its behavior in a manner that is clear and understandable to specific audiences \cite{barredoarrietaExplainableArtificialIntelligence2020}. Explainable systems are particularly valuable as they enable users to comprehend, critique, and improve models. While substantial research has focused on explaining black-box models \cite{ribeiroWhyShouldTrust2016,lundbergConsistentIndividualizedFeature2019}, with some researchers criticizing this emphasis \cite{rudinStopExplainingBlack2019}, increased attention is now directed toward systems that are explainable by design \cite{moshkovitzExplainableKMeansKMedians2020}.

There are multiple approaches to address explainability. A common objective is to find methods that can provide explanations for the behavior of existing models without requiring additional information about the model's structure. These are called model-agnostic techniques for post-hoc explainability \cite{carvalhoMachineLearningInterpretability2019}. They fall into two main categories \cite{barredoarrietaExplainableArtificialIntelligence2020}: feature relevance techniques that describe black-box model behavior by ranking or quantifying the influence of each feature on predictions \cite{lundbergConsistentIndividualizedFeature2019,baehrensHowExplainIndividual2010}, which have been applied to clustering \cite{alvarez-garciaComprehensiveFrameworkExplainable2024,ellisAlgorithmAgnosticExplainabilityUnsupervised2021,ellisExplainableFuzzyClustering2024}. As they provide no direct insights into the underlying dataset \cite{moshkovitzExplainableKMeansKMedians2020}, this have drawn criticism in high-stakes scenarios \cite{rudinStopExplainingBlack2019}; and \textbf{simplification} techniques that find more interpretable models approximating black-box classifiers, facing trade-offs between interpretability and accuracy, with approaches like LIME \cite{ribeiroWhyShouldTrust2016} restricting simplification to neighborhoods but potentially leading to incoherent explanations across different regions \cite{amgoudAxiomaticFoundationsExplainability2022}.

High-stakes domains like healthcare particularly require both explainability and cautiousness. However, explaining cautious clustering remains relatively unexplored despite its importance. As noted in some works \cite{zhangSurveyEvidentialClustering2024}, the development of explainability clustering methods over uncertain/imprecise data is worth exploring, especially for cautious and explainable approaches over imperfect data sources.

The main \textbf{objectives} of this paper are:
\begin{enumerate}
    \item To conduct a comprehensive investigation of decision trees as explainers for hard clustering functions, establishing conditions that define effective explanations.
    \item To develop a theoretical framework that extends these conditions to encompass uncertainty and imprecision, particularly within the evidential clustering paradigm.
    \item To introduce an innovative \textbf{Explainable Evidential Clustering} method through a novel algorithm grounded in these theoretical foundations.
\end{enumerate}

Our key \textbf{contributions} include:
\begin{enumerate}
    \item Building upon utility functions, we introduce the concept of \textit{Representativeness}, which quantifies decision-makers' preferences regarding errors committed by an explainer. This advancement enables systematic evaluation of cautious explanations.
    \item We propose the Evidential Mistakeness function and develop the \textbf{Iterative Evidential Mistake Minimization} (IEMM) algorithm. This novel approach, inspired by the IMM algorithm \cite{moshkovitzExplainableKMeansKMedians2020}, generates decision trees that effectively explain evidential clustering functions.
    \item We implement and validate this algorithm on both synthetic and real-world datasets, demonstrating how to select utility parameters that represent different decision attitudes.
\end{enumerate}

The remainder of this paper is structured as follows. Section~\ref{sec:background} presents the theoretical foundations, encompassing belief functions, evidential clustering, and explainability. Section~\ref{sec:algorithm} introduces utility functions, representativeness, evidential mistakeness, and the IEMM algorithm. We provide formal analysis of these concepts along with illustrative examples. We also make available the complete code for all experiments at \url{https://github.com/victorsouza89/iemm}.

\section{Background}\label{sec:background}
Let $X$ represent a set of \textbf{observations} in a known \textbf{feature space} $\mathbb{X}$. We assume $X = \{x_1, ..., x_N\} \subset \mathbb{X} = \mathcal{A}_1 \times \ldots \times \mathcal{A}_D$, where each element of $\mathcal{D} = \{\mathcal{A}_1, \ldots, \mathcal{A}_D\}$ is called an \textbf{attribute}. We assume all attributes are finite\footnote{This finite attribute assumption is crucial for decision tree operations. We may occasionally refer to $\mathbb{X}$ as $\mathbb{R}^D$ for simplicity, though this is an abuse of notation. When working with continuous attributes, we implicitly discretize the space (for example, with a dataset $X \subset \mathbb{R}^D$, we typically consider binary attributes $\mathcal{A}_{d, \theta} = \{True, False\}$ for each dimension $d \in \{1, ..., D\}$ and threshold $\theta \in \{x_d : x \in X\}$, where $x_{\mathcal{A}_{d, \theta}} = True$ if and only if $x_d \geq \theta$).}. In essence, $X$ is a set of $D$ measurements for each of $N$ objects, while $\mathbb{X}$ encompasses all possible measurements.

A classification problem is the task of assigning each observation in $X$ to an outcome from a finite set $\Omega = \{\omega_1, ..., \omega_C\}$, which we call the \textbf{frame of discernment}. A function that performs this assignment is called a \textbf{classifier}. When a set of training examples is available, we refer to the problem of constructing such function as \textbf{supervised classification}. In contrast, if no training examples are available and the goal is to group observations based on their similarity—without prior knowledge of the classes or labels—the task is called \textbf{unsupervised classification} or \textbf{clustering}.

\subsection{Belief Functions}\label{sec:belief-functions}
The Dempster-Shafer theory of evidence \cite{shaferMathematicalTheoryEvidence1976} provides a framework for representing uncertain and imprecise information. At the core of this theory lies the \textbf{mass of belief function}, or simply \textbf{mass function}—a map defined as:
$$m : 2^\Omega \rightarrow [0,1] \text{ such that } \sum_{A \subseteq \Omega} m(A) = 1.$$ Within this framework, an element $\omega \in\Omega$ represents the finest level of discernible information. The mass $m(A)$ quantifies the degree of confidence in the statement that 'the correct hypothesis $\omega$ belongs to $A \subseteq \Omega$, yet it remains impossible to determine which specific element of $A$ is correct'.

From the mass function, we derive two important measures—belief and plausibility functions:
$$
\operatorname{Bel}(A)=\sum_{B \subseteq A} m(B) \text{ and }
\operatorname{Pl}(A)=\sum_{B \cap A \neq \emptyset} m(B).
$$ The belief function $\operatorname{Bel}(A)$ for some $A \subseteq \Omega$ represents the degree of confidence that 'the correct hypothesis $\omega$ belongs to $A$'. In contrast, the plausibility function $\operatorname{Pl}(A)$ captures the degree of confidence that 'it is not impossible for the correct hypothesis $\omega$ to belong to $A$'.

Mass functions generalize probability mass functions by distributing belief across all subsets of $\Omega$ rather than just its individual elements. When $m(\emptyset) = 0$, we say the mass \textit{satisfies the closed-world hypothesis} \cite{smetsNonStandardLogicsAutomated1988}, meaning it rejects the possibility that the correct hypothesis $\omega$ lies outside $\Omega$.

We define the \textbf{focal set} of $m$ as $\mathbb{F}_m = m^{-1}(]0,1])$—the collection of all subsets of $\Omega$ assigned nonzero belief. Each member of this set is known as a \textbf{focal element}. Mass functions can be categorized based on their focal elements:

\begin{itemize}
    \item If all focal elements are singletons (of cardinality 1), then $p(\omega) = m(\{\omega\})$ forms a probability mass function, and $m$ is called a Bayesian mass function.
    
    \item A mass function with exactly one focal element $A$ is called \textit{categorical}, representing the logical assertion that '$\omega$ belongs to $A$'. If this single focal element is $\Omega$ itself, the function is \textit{vacuous}, conveying no information beyond the closed-world hypothesis.

    
\end{itemize}

We denote by $\mathbb{M}$ the set of all mass functions defined on $\Omega$. For notational simplicity, we may write $\omega_i \cup \omega_j \cup \omega_k \cup ...$ to represent the subset $\{\omega_i, \omega_j, \omega_k, ...\}$.

\subsection{Evidential Clustering}\label{sec:evidential-clustering}
\begin{figure*}[ht]
    \centering
    \includegraphics[width=\linewidth]{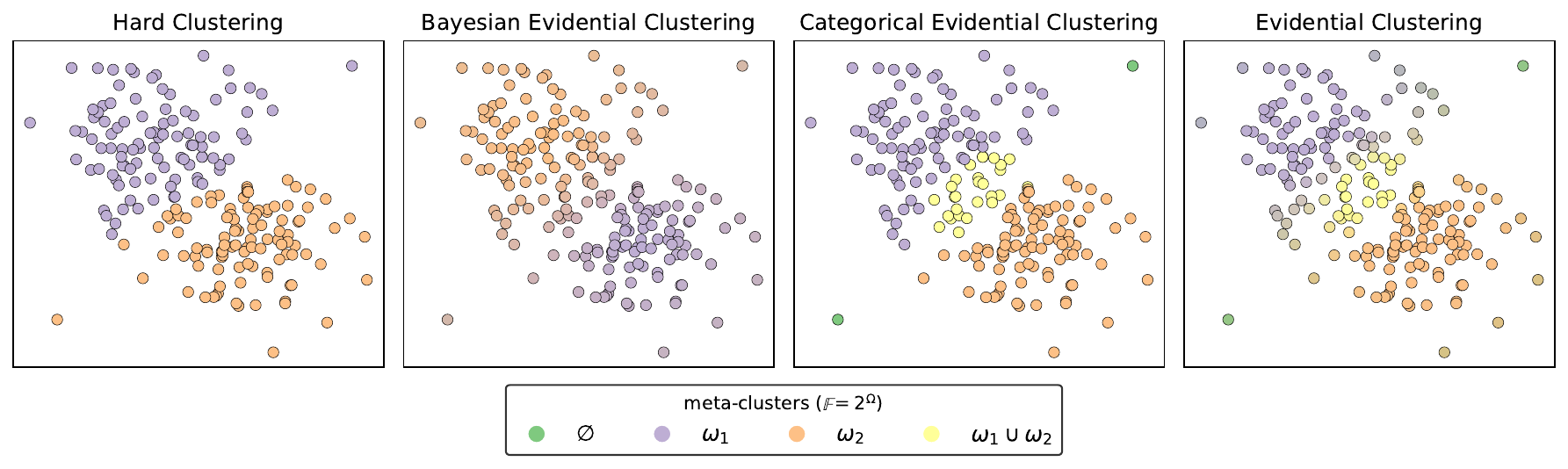}
    \caption{A representation of different clustering functions over a synthetic dataset. In this case, $\mathbb{X} = \mathbb{R}^2$ and $|\Omega| = 2$. Dataset was constructed by sampling 100 points from two normal distributions with centers at $(3, 5)$ and $(5, 3)$ and $\sigma$ of 1. Two outliers were added, at $(2, 2)$ and $(6, 6)$. The \texttt{evclust} package \cite{soubeigaEvclustPythonLibrary2025} was used to perform clustering. Hard clustering assigns each point to a single cluster. Bayesian evidential clustering gives a membership level for each observation. Categorical evidential clustering introduces the information about in-between points ($\omega_1 \cup \omega_2$) and outliers ($\emptyset$). Finally, evidential clustering combines all the previous information. The gradient of colors for each point visually represents the mass.}
  
    \label{fig:clustering-methods}

    \vspace{-10pt}
  \end{figure*}

\begin{definition}
  An \textbf{evidential clustering} is a map $\mathcal{M} : X \rightarrow \mathbb{M}$.
  
  For an observation $x \in X$, the function $\mathcal{M}(x)$, which we may denote as $m_x$, when evaluated at $A \subseteq \Omega$, returns the degree of confidence attributed to the statement 'the class $\omega$ corresponding to $x$ belongs to $A$, and it is not possible to determine which specific element of $A$ is the correct one'.
\end{definition}
We refer to each element of $\Omega$ as a cluster and each non-singleton subset of $\Omega$ as a metacluster. Within the context of an evidential clustering function, we denote $\mathbb{F}_{\mathcal{M}} = \bigcup_{x \in X} \mathbb{F}_{m_x}$. When all $m_x$ are categorical, we say that $\mathcal{M}$ is \textit{categorical}. Similarly, when all elements of $\mathbb{F}_{\mathcal{M}}$ are singletons, we call $\mathcal{M}$ \textit{bayesian}. An evidential clustering function that is both categorical and bayesian naturally induces a hard clustering. A \textbf{hard clustering} is simply a partition of observations into clusters, formalized as a surjection $\mathcal{C} : X \rightarrow \Omega$.

Figure \ref{fig:clustering-methods} illustrates various clustering methods for $\Omega = \{\omega_1, \omega_2\}$. While hard clustering assigns each point to exactly one cluster, evidential clustering offers a more sophisticated representation by capturing uncertainty and imprecision. It identifies points that may plausibly belong to multiple clusters (represented in the figure by those primarily associated with $\omega_1 \cup \omega_2$) and accounts for points not clearly associated with any cluster (shown as those predominantly linked to $\emptyset$).

Some clustering algorithms naturally produce a centroid for each cluster. A \textbf{centroid} $v_{\omega}$ (or $v_{A}$) is a point in the feature space $\mathbb{X}$ that represents its cluster $\omega \in \Omega$ (or metacluster $A \subset \Omega$).

For the remainder of this work, we assume $\emptyset \notin \mathbb{F}_{\mathcal{M}}$, rejecting the outlier hypothesis.

\subsection{Decision Trees Explaining Hard Clustering}\label{sec:explainability}
In this paper, we propose a simplification technique for explaining clustering. Such simplification techniques typically rely on rule extraction methods \cite{barredoarrietaExplainableArtificialIntelligence2020}. In this context, a particularly desirable outcome \cite{amgoudAxiomaticFoundationsExplainability2022} is what is known as an abductive explanation. Abductive explanations were introduced to address the question: "Why is $\Gamma(x) = \omega$?", providing a sufficient reason for characterizing the label $\omega$, where $\Gamma$ is a supervised classifier \cite{ignatievAbductionBasedExplanationsMachine2019}. In the following of this paper we will denote $\mathbf{C}$ the set of all consistent subsets of feature literals. More details on the definitions and formal aspects of explainers can be found in appendix \ref{app:simplification-explanation-techniques}.

\marginsubsection{Decision Trees as Explainers}
\textbf{Decision Trees} (DTs) \cite{quinlanSimplifyingDecisionTrees1987} are classical machine learning algorithms, classifiers based on rooted computation trees expressed as recursive partitions of the observation space \cite{rokachDecisionTrees2005}. Building upon these partitional aspects, we define a \textbf{node} of a decision tree as the subset $S \subseteq \mathbb{X}$ associated with its vertices. The nodes corresponding to the terminal vertices of the decision tree are called \textbf{leaves}.

Any decision tree $\Delta:\mathbb{X} \rightarrow \Omega$ can be transformed in linear time into a DNF expression \cite{audemardExplanatoryPowerBoolean2022a} (see appendix \ref{app:simplification-explanation-techniques}, equation \eqref{eq:DNF}). Due to this property, it induces an explainer $\chi_\Gamma^\Delta : \Omega \rightarrow 2^\mathbf{C}$. When $\Delta = \Gamma$, this explainer is representative and therefore provides abductive explanations. More details on the decision tree construction from the standpoint of explainability can be found in appendix \ref{app:dts-as-explainers}, along with the proof that representativity is a necessary and sufficient condition for decision trees to provide abductive explanations.

In this context, it is natural to assess the quality of explanations by the "representativeness" of the explainer $\chi_\Gamma^\Delta$. This assessment is typically performed by measuring the accuracy \cite{ribeiroWhyShouldTrust2016,izzaProvablyPreciseSuccinct2022,narodytskaAssessingHeuristicMachine2019} of the underlying classifier $\Delta$. Thus, the quality of the explanation provided by $\chi_\Gamma^\Delta$ about $\Gamma$ is quantified as:
\begin{equation}
    \operatorname{Accuracy}_\Gamma(\Delta) = \frac{|\{x \in X : \Gamma(x) = \Delta(x)\}|}{|X|}.
    \label{eq:precision_hard}    
\end{equation}

\marginsubsection{Decision Trees Explaining Hard Clustering}\label{sec:decision-trees-explaining-hard-clustering}
A significant body of work on explainable (hard) clustering builds upon unsupervised decision trees \cite{bandyapadhyayHowFindGood2023}. Generally, clustering explanation techniques draw inspiration from supervised classification approaches. A common strategy is to treat the output of a hard clustering algorithm as ground truth and then fit a decision tree classifier to these results, as illustrated in Figure \ref{fig:scheme-imm}.

\begin{figure}[ht]
    \centering
    \resizebox{0.8\linewidth}{!}{\begin{tikzpicture}
    \node[] (input) at (0,0) {$X$};
    \node [draw, rectangle, 
    minimum height=3.0em, minimum width=8em, right = 1 of input, name=clustering]{hard clustering};
    \node[draw, rectangle, 
    minimum height=3.0em, minimum width=8em, right = 2.5cm of clustering, name=dt]{DT Training};
    \node[right = 1cm of dt] (output) {$\Delta :\mathbb{X} \rightarrow \Omega$};

    \draw [->, very thick] (input) -- (clustering);
    \draw [->, very thick] (clustering) -- (dt) node[midway, above] {$\mathcal{C} : X \rightarrow \Omega$};
    \draw [->, very thick] (dt) -- (output);
\end{tikzpicture}}
    \caption{Scheme of explainable clustering.}
    \label{fig:scheme-imm}
    \vspace{-10pt}
\end{figure}
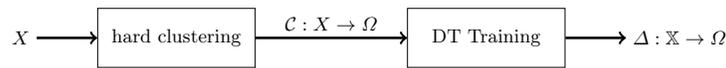

It is then natural to use $\operatorname{Accuracy}_\mathcal{C}(\Delta)$ as the quality measure for how well the decision tree $\Delta$ explains the original clustering $\mathcal{C}$. A notable algorithm developed for this purpose is the Iterative Mistake Minimization (IMM) \cite{moshkovitzExplainableKMeansKMedians2020}. IMM is a decision tree fitting algorithm that leverages the original hard cluster centroid structure to construct an explainer where each leaf contains exactly one centroid, mapping the points in the leaf to the cluster associated with that centroid. This approach relies on the concept of a mistake \cite{moshkovitzExplainableKMeansKMedians2020}.

\begin{definition} \label{def:mistake}
    Let $\mathcal{C}$ be a hard clustering. A \textbf{mistake} in a DT node $S \subseteq \mathbb{X}$ occurs when a point $x\in S$ has its associated cluster centroid $v_{\mathcal{C}(x)}$ outside of $S$, i.e., $v_{\mathcal{C}(x)} \notin S$.
\end{definition}  

The \textbf{number of mistakes} in a decision tree is the sum of mistakes committed with respect to $\mathcal{C}$ across all leaves. The IMM aims to minimize the number of mistakes induced by each split in the decision tree. At each step of the iterative process, the number of mistakes quantifies the explanation cost, that is, the accuracy loss introduced when using the decision tree explainer compared to the original clustering.

\section{An Algorithm for Explaining Evidential Clustering}\label{sec:algorithm}

Our objective is to extend the concept of decision trees for cluster explanation to the evidential setting. We aim to construct a decision tree that provides a representative approximation of the evidential clustering function, as illustrated in Figure \ref{fig:scheme-iemm}.

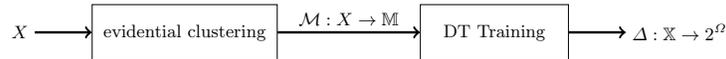
\begin{figure}[ht]
    \centering
    \resizebox{0.8\linewidth}{!}{\begin{tikzpicture}
    \node[] (input) at (0,0) {$X$};
    \node [draw, rectangle, 
    minimum height=3.0em, minimum width=10em, right = 1 of input, name=clustering]{evidential clustering};
    \node[draw, rectangle, 
    minimum height=3.0em, minimum width=8em, right = 2.5cm of clustering, name=dt]{DT Training};
    \node[right = 1cm of dt] (output) {$\Delta:\mathbb{X} \rightarrow 2^\Omega$};

    \draw [->, very thick] (input) -- (clustering);
    \draw [->, very thick] (clustering) -- (dt) node[midway, above] {$\mathcal{M} : X \rightarrow \mathbb{M}$};
    \draw [->, very thick] (dt) -- (output);
\end{tikzpicture}}
    \caption{Scheme of Explainable Evidential Clustering.}
    \label{fig:scheme-iemm}
    \vspace{-10pt}
\end{figure}

To achieve this, and inspired by IMM, we generalize the notion of a mistake for evidential classifiers and propose an algorithm that seeks to minimize this generalized loss.

\subsection{Explaining Classifiers under Uncertainty and Imprecision}

Let us consider a categorical evidential classifier\footnote{We use the term classifier to emphasize that the development of this section is valid not only for clustering but also for all (categorical) evidential partitions of the data.} $\mathcal{M}_c : X \rightarrow \mathbb{M}$. We define $\overline{\mathcal{M}}_c : X \rightarrow 2^\Omega$ such that, for any $x \in X$ and $A \subseteq \Omega$, $\mathcal{M}_c(x)(A) = 1$ if and only if $\overline{\mathcal{M}}_c(x) = A$.

\begin{definition}
    A \textbf{cautious explainer} for the categorical evidential classifier $\mathcal{M}_c$ is a map $\chi_{\mathcal{M}_c} : 2^\Omega \rightarrow 2^\mathbf{C}$.
\end{definition}

\revmarginsubsection{Utility}
While in the hard case the DT explainer quality relates closely to representativity (details in appendix \ref{app:dts-as-explainers}), in the evidential setting this concept needs refinement since errors vary in severity. For instance, assigning an observation $x$ with $\overline{\mathcal{M}}_c(x) = \{\omega_1\}$ to $\{\omega_1, \omega_2\}$ is potentially less severe than assigning it to $\{\omega_2\}$ alone. To capture these nuances, we introduce the concept of utility that quantifies the satisfaction of a decision-maker when assigning a subset $A$ when the ground truth is $B$.

\begin{definition}
    A \textbf{utility function} is a map $\mathcal{U} : 2^\Omega \times 2^\Omega \rightarrow [0,1]$ such that, $\forall A, B \in 2^\Omega$, $\mathcal{U}(A, A) = 1$ and $A \cap B = \emptyset \Rightarrow \mathcal{U}(A, B) = 0$.
\end{definition}

\revmarginsubsection{Costs}
In the context of a categorical evidential partition, we want to characterize the cost of explaining a point $x$ with a cautious explainer induced by some interpretable classifier $\Delta: X \rightarrow 2^\Omega$.

The utility $\mathcal{U}(A, \overline{\mathcal{M}}_c(x))$ quantifies the satisfaction of assigning metacluster $A$ to observation $x$ and, therefore, equals the cost of not assigning $A$ to $x$. Thus, the total cost for $x$ can be understood as $\overline{\operatorname{Cost}_{\mathcal{M}_c, \Delta}} : X \rightarrow [0,|\mathbb{F}|-1]$, the sum of costs from not assigning $x$ to all metaclusters $A \neq \Delta(x)$, 
\begin{equation}
    \overline{\operatorname{Cost}_{\mathcal{M}_c, \Delta}}(x) = \sum_{A \neq \Delta(x)} \mathcal{U}(A, \overline{\mathcal{M}}_c(x)).
    \label{eq:cost_up}
\end{equation}

Conversely, $1-\mathcal{U}(A, \overline{\mathcal{M}}_c(x))$ represents the cost of assigning $A$ to $x$, leading to $\underline{\operatorname{Cost}_{\mathcal{M}_c, \Delta}} : X \rightarrow [0,1]$, an alternative expression for total cost:
\begin{equation}
    \underline{\operatorname{Cost}_{\mathcal{M}_c, \Delta}}(x) = 1-\mathcal{U}(\Delta(x), \overline{\mathcal{M}}_c(x)).
    \label{eq:cost_down}
\end{equation}

It always holds that $\underline{\operatorname{Cost}_{\mathcal{M}_c, \Delta}}(x) \leq \overline{\operatorname{Cost}_{\mathcal{M}_c, \Delta}}(x)$. When $\mathcal{U}(A, B) = \mathbbm{1}_{A = B}$, the equality holds. Furthermore, if $\mathcal{M}_c$ induces a hard clustering and $\Delta$ is IMM-like (with exactly one centroid per leaf), both equal one (and not zero) if and only if $x$ is a mistake as stated in Definition \ref{def:mistake}.

\marginsubsection{A simple example}
Figure \ref{fig:utility} and Table \ref{tab:utility_costs} illustrate the utility concept through a concrete example. The value $\mathcal{U}(\{\omega_1, \omega_2\}, \{\omega_1\})$ quantifies how tolerable the mistake at $x_1$ is. When $\mathcal{U}(\{\omega_1, \omega_2\}, \{\omega_1\}) = 0$, this mistake becomes as intolerable as the one at $x_2$. Conversely, when $\mathcal{U}(\{\omega_1, \omega_2\}, \{\omega_1\}) = 1$, it becomes as tolerable as the correct assignment at $x_0$. In this latter scenario, removing $x_2$ from the dataset would yield an optimal assignment.

\begin{table}[h]
  \vspace{-10pt}
  \begin{minipage}[c]{0.48\linewidth}
    \centering
        \includegraphics[width=\linewidth]{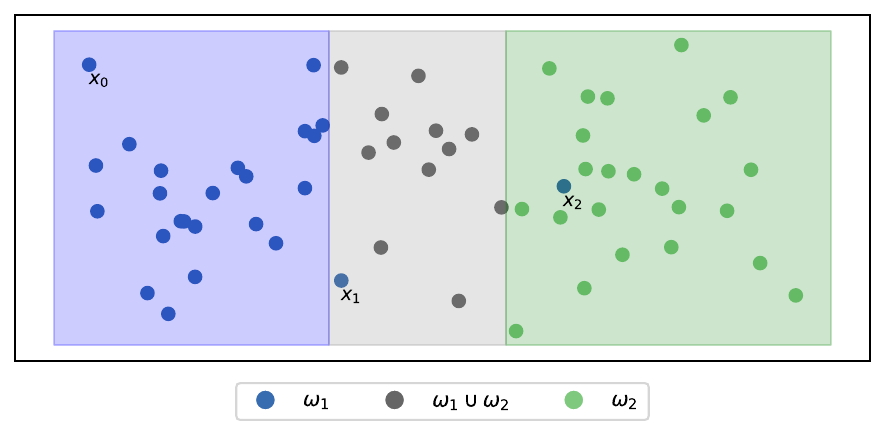}
        \captionof{figure}{Illustration of a categorical evidential classifier and space partition in $\mathbb{X} = \mathbb{R}^2$. The partition $\Delta$ separates $x_1$ and $x_2$ from their respective metaclusters, while correctly assigning all other observations.}
    \label{fig:utility}
  \end{minipage}
  \hfill
  \begin{varwidth}[c]{0.5\linewidth}
    \centering
    \resizebox{\linewidth}{!}{%
    {\renewcommand{\arraystretch}{1.5} 
    \begin{tabular}{c c c c}
        \toprule
        $x$ & $x_0$ & $x_1$ & $x_2$ \\
        \midrule
        $\Delta(x) $ & $\{\omega_1\}$ & $\{\omega_1, \omega_2\}$ & $\{\omega_2\}$ \\
        $\underline{\operatorname{Cost}_{\mathcal{M}_c, \Delta}}(x)$ & $0$ & $1 - \mathcal{U}(\{\omega_1, \omega_2\}, \{\omega_1\})$ & $1$ \\
        $2^\Omega \setminus \Delta(x) $ & $\{ \{\omega_2\}, \{\omega_1, \omega_2\}\}$ & $\{ \{\omega_1\}, \{\omega_2\}\}$ & $\{ \{\omega_1\}, \{\omega_1, \omega_2\}\}$ \\
        $\overline{\operatorname{Cost}_{\mathcal{M}_c, \Delta}}(x)$ & $\mathcal{U}(\{\omega_1, \omega_2\}, \{\omega_1\})$ & $1$ & $1 + \mathcal{U}(\{\omega_1, \omega_2\}, \{\omega_1\})$ \\
        \bottomrule
    \end{tabular}
    }}
    \vspace{5pt}
    \caption{For each point highlighted in Figure \ref{fig:utility}, we present the point $x$, the metacluster assigned by classifier $\Delta$, the cost of assigning $x$ to the metacluster designated by $\Delta$, the set of metaclusters not assigned by $\Delta$, and the cost of not assigning $x$ to them. Note that $\overline{\mathcal{M}}_c(x_0) = \overline{\mathcal{M}}_c(x_1) = \overline{\mathcal{M}}_c(x_2) = \{\omega_1\}$.}
    \label{tab:utility_costs}
  \end{varwidth}%
\vspace{-20pt}
\end{table}

\marginsubsection{Evidential Representativeness}
In the cautious case, a representative explainer is one that minimizes the total cost, assigning each observation to a metacluster that maximizes the utility.

\begin{definition}
A \textbf{$\mathcal{U}$-representative cautious explainer} for a categorical evidential clustering is a cautious explainer $\chi_{\mathcal{M}_c}$ such that, $\forall A \in 2^\Omega$, $\forall x \in \overline{\mathcal{M}}_c^{-1}(\{A\})$, there exists $L \in \bigcup_{\mathcal{U}(B, A) = 1} \chi_{\mathcal{M}_c}(B)$ such that, $\forall \langle\mathcal{A}, v \rangle \in L$, $x_\mathcal{A} = v$.
\end{definition}

The choice of utility function crucially determines what makes an explainer representative. Intuitively, utility functions yielding higher values are more "flexible" and tolerate more mistakes. While accuracy in the hard case measured explanation quality (through the representativity of the decision tree explainer), in the categorical evidential case, utility allows us to define the $\mathcal{U}$\textbf{-categorical representativeness} of a cautious explainer as its mean accuracy:
\begin{equation}
    \mathcal{R}_{\mathcal{M}_c, \mathcal{U}}(\Delta) = \frac{1}{|X|} \sum_{x \in X} \mathcal{U}(\Delta(x), \overline{\mathcal{M}}_c(x)).
    \label{eq:precision_categorical}
\end{equation}

For any evidential partition $\mathcal{M} : X \rightarrow \mathbb{M}$, we can extend this. Let the \textbf{$\mathcal{U}$-evidential representativeness} $\mathcal{R}_{\mathcal{M}, \mathcal{U}} : (\mathbb{X} \rightarrow 2^\Omega) \rightarrow [0,1]$ of a cautious explainer be the expected categorical representativeness weighted by the mass function:
\begin{equation}
    \mathcal{R}_{\mathcal{M}, \mathcal{U}}(\Delta) = \frac{1}{|X|} \sum_{x \in X} \sum_{B \in \mathbb{F}_{\mathcal{M}}} \mathcal{U}(\Delta(x), B) m_x(B).
    \label{eq:precision_evidential}
\end{equation}

Equation \eqref{eq:precision_categorical} is clearly a special case of \eqref{eq:precision_evidential} when $\mathcal{M}$ is categorical. Additionally, equations \eqref{eq:precision_evidential} and \eqref{eq:precision_hard} coincide when $\mathcal{M}$ is a hard clustering and $\mathcal{U}(A, B) = \mathbbm{1}_{A = B}$.

\revmarginsubsection{Evidential Mistakeness}
With our updated representativeness notion, we can now parallel the concept of mistake for the evidential case as a cost function capturing the representativeness loss associated with a single DT explainer node. Recall from Definition \ref{def:mistake} that, in the hard case, the number of mistakes in a node $S$ can be described in two equivalent ways:
\begin{enumerate}
    \item The number of mistakes in $S$ is the number of points $x \in S$ such that $\exists v_\omega \notin S$ with $\omega = \mathcal{C}(x)$.
    \item The number of mistakes in $S$ is the number of points $x \in S$ such that $\forall v_\omega \in S$, $\omega \neq \mathcal{C}(x)$.
\end{enumerate}

The first formulation relates to cost given at equation \eqref{eq:cost_up}, while the second relates to equation \eqref{eq:cost_down}. Translating these to the evidential setting yields two natural definitions of \textbf{evidential mistakeness}:
\begin{enumerate}
    \item The evidential mistakeness in $S$ is the sum of the costs introduced by not assigning points $x$ in $S$ to metaclusters that are not in $S$:
    \begin{equation}
        \overline{M}_{\mathcal{M}, \mathcal{U}}(S) = \sum_{x \in S} \sum_{v_A \notin S} \sum_{B \in \mathbb{F}_{\mathcal{M}}} \mathcal{U}(A, B)m_x(B).
        \label{eq:mistakeness_up}
    \end{equation}
    \item The evidential mistakeness in $S$ is the sum, over all $x$ in $S$, of the expected cost of assigning $x$ to some metacluster in $S$:
    \begin{equation}
        \underline{M}_{\mathcal{M}, \mathcal{U}}(S) = \sum_{x \in S} \sum_{v_A \in S} \sum_{B \in \mathbb{F}_{\mathcal{M}}} \frac{(1 - \mathcal{U}(A, B)) m_x(B)}{|\{A \in \mathbb{F}_{\mathcal{M}} : v_A \in S\}|}
        \label{eq:mistakeness_down}
    \end{equation}
\end{enumerate}

When $\mathcal{M}$ induces a hard clustering and $\mathcal{U}(A, B) = \mathbbm{1}_{A = B}$, Equation \eqref{eq:mistakeness_up} equals the number of mistakes in $S$. Additionally, the total cost of a cautious DT explainer induced by Equation \eqref{eq:mistakeness_down} is zero if and only if the explainer is $\mathcal{U}$-representative.

For IMM-like algorithms where all leaves $S$ contain exactly one centroid, both evidential mistakeness forms from Equations \eqref{eq:mistakeness_up} and \eqref{eq:mistakeness_down} are minimized by explanations with maximal evidential representativeness (see proof on appendix \ref{app:mistakeness-precision}). Furthermore, for IMM-like algorithms, if $\mathcal{M}$ is categorical:
$$\overline{M}_{\mathcal{M}, \mathcal{U}}(S) = \sum_{x \in S} \overline{\operatorname{Cost}_{\mathcal{M}, \Delta}}(x) \text{ and } \underline{M}_{\mathcal{M}, \mathcal{U}}(S) = \sum_{x \in S} \underline{\operatorname{Cost}_{\mathcal{M}, \Delta}}(x).$$

The key difference between these definitions emerges in nodes containing multiple centroids, where Equation \eqref{eq:mistakeness_down} penalizes such nodes more heavily than Equation \eqref{eq:mistakeness_up}. This makes Equation \eqref{eq:mistakeness_up} better suited for conservative explainers, while Equation \eqref{eq:mistakeness_down} is preferable for more optimistic ones.

\subsection{Implementation}

\marginsubsection{Choosing a Utility Function}
When an explainer yields a metacluster $A$, while the original classifier assigns $B$, two types of errors can occur. The first is insufficient coverage, measured by $|A^C \cap B|$ - where the explainer fails to include all elements of the true metacluster. The second is excessive coverage, measured by $|A \cap B^C|$ - where the explainer includes elements not in the true metacluster. Penalizing insufficient coverage indicates the explainer is not cautious enough, while penalizing excessive coverage suggests it is too cautious.

To address both error types, we introduce two families of utility functions with a positive parameter $\lambda$ controlling their behavior:
$$
    \overline{\mathcal{U}}^\lambda(A, B) = \left( \frac{|A \cap B|}{|A \cup B|} \mathbbm{1}_{B \subset A} \right)^{1/\lambda}   
\text{ and }
    \underline{\mathcal{U}}^\lambda(A, B) = \left( \frac{|A \cap B|}{|A \cup B|} \mathbbm{1}_{A \subset B} \right)^{1/\lambda}.
$$

These utility functions exhibit complementary tolerance behaviors. The function $\overline{\mathcal{U}}^\lambda(A, B)$ assigns zero utility when $A^C \cap B \neq \emptyset$, making it completely intolerant to insufficient coverage while allowing parameter $\lambda$ to modulate tolerance to excessive coverage. Higher $\lambda$ values reduce penalties for excessive coverage, embodying a more cautious approach. In contrast, $\underline{\mathcal{U}}^\lambda(A, B)$ assigns zero utility when $A \cap B^C \neq \emptyset$, showing complete intolerance to excessive coverage while $\lambda$ controls the degree of tolerance to insufficient coverage—higher $\lambda$ values reducing penalties for insufficient coverage and representing a more optimistic approach.

These combine into a comprehensive family of utility functions for $\lambda \in \mathbb{R}^*$:
$$\mathcal{U}^\lambda(A, B) = \begin{cases}
    \underline{\mathcal{U}}^{|\lambda|}(A, B) & \text{if } \lambda < 0 \\
    \overline{\mathcal{U}}^{|\lambda|}(A, B) & \text{if } \lambda > 0
\end{cases}.$$

We also define special cases as limits when $\lambda$ approaches 0 and $\pm \infty$. That gives $\mathcal{U}^0(A, B) = \mathbbm{1}_{A = B}, \
\mathcal{U}^{-\infty}(A, B) = \mathbbm{1}_{A \subset B} \text{ and }
\mathcal{U}^\infty(A, B) = \mathbbm{1}_{B \subset A}.$

Finally, based on these, we define the $\lambda$\textbf{-evidential mistakeness} as:
\begin{equation}
    M^\lambda_\mathcal{M} = 
    \begin{cases}
        \overline{M}_{\mathcal{M}, \mathcal{U}^\lambda} & \text{if } \lambda \geq 0 \\
        \underline{M}_{\mathcal{M}, \mathcal{U}^\lambda} & \text{if } \lambda < 0
    \end{cases}
\end{equation}
for any $\lambda \in \mathbb{R} \cup \{\pm\infty\}$. The higher the $\lambda$, the more the mistakeness function represents a conservative approach.


\revmarginsubsection{The Algorithm}
Inspired by IMM, we propose the \textbf{Iterative Evidential Mistake Minimization (IEMM)}.
\begin{wrapfigure}{L}{0.67\textwidth}
\begin{minipage}{0.67\textwidth}
    \vspace{-40pt}
    \input{figs/algos/iemm.tex} 
    \vspace{-40pt}
\end{minipage}
\end{wrapfigure}
The IEMM fits a decision tree based on an evidential clustering by minimizing the evidential mistakeness function (see Algorithm \ref{algo:iemm}). Each iteration of IEMM, for a region $S \subseteq \mathbb{X}$, considers a subset $F \subseteq \mathbb{F}_{\mathcal{M}}$ of the focal sets whose metacluster centroids lie within $S$ and finds the split that, by separating centroids, minimizes the contribution to the evidential mistakeness.

This algorithm extends IMM by accepting evidential partitions as input. When the underlying clustering function is hard and $\lambda=0$ is chosen, the algorithm operates identically to IMM because evidential mistakeness equals the number of mistakes. For cautious partitions as input, varying $\lambda$ controls the "level of cautiousness" of the explainer.

From a computational perspective, the baseline IMM algorithm has complexity $O(C \cdot D \cdot N \cdot \log N)$ \cite{moshkovitzExplainableKMeansKMedians2020}, where $C = |\Omega|$ represents the number of clusters, $D$ the dimensionality of the feature space, and $N$ the sample count. Our IEMM algorithm extends this by incorporating utility computations at each node, adding an $O(K^2)$ factor where $K = |\mathbb{F}|$ is the number of metaclusters. This results in a total complexity of $O(K^2 \cdot D \cdot N \cdot \log N)$. In practice, explainable decision trees typically employ a modest number of metaclusters, mitigating potential performance concerns.

We have implemented IEMM using Python 3.9.6. All code is available at \url{https://github.com/victorsouza89/iemm}. The implementation of a decision tree accepting evidential labels was based on the code made available by \cite{hoarauEvidentialRandomForests2023}.

\marginsubsection{The Tests}
Using the \texttt{evclust} library \cite{soubeigaEvclustPythonLibrary2025}, we generated three evidential partitions over synthetic datasets of 2 features ($x$ and $y$). Those were:
\begin{itemize}
    \item A dataset of 200 entries over which we defined $\mathcal{M}_\text{easy}$, with $\Omega = \{\omega_1, \omega_2\}$ and $\mathbb{F}_{\mathcal{M}_\text{easy}} = 2^{\Omega} \setminus \emptyset$.

    \item A dataset of 300 samples and, for $\Omega = \{\omega_1, \omega_2, \omega_3\}$, we generated two types of evidential clustering functions:
\begin{itemize}
    \item $\mathcal{M}_\text{full}$, an evidential clustering with $\mathbb{F}_{\mathcal{M}_\text{full}} = 2^{\Omega} \setminus \emptyset$.
    
    \item $\mathcal{M}_\text{quasi-bayesian}$, an evidential clustering that is a quasi-bayesian clustering function. This means that the focal sets are the singletons and the whole space. That is, $\mathbb{F}_{\mathcal{M}_\text{quasi-bayesian}} = \{\{\omega_1\}, \{\omega_2\}, \{\omega_3\}, \{\omega_1, \omega_2, \omega_3\}\}$.
\end{itemize}
\end{itemize}

\begin{wrapfigure}{r}{0.6\textwidth}
    \centering
    \includegraphics[width=\linewidth]{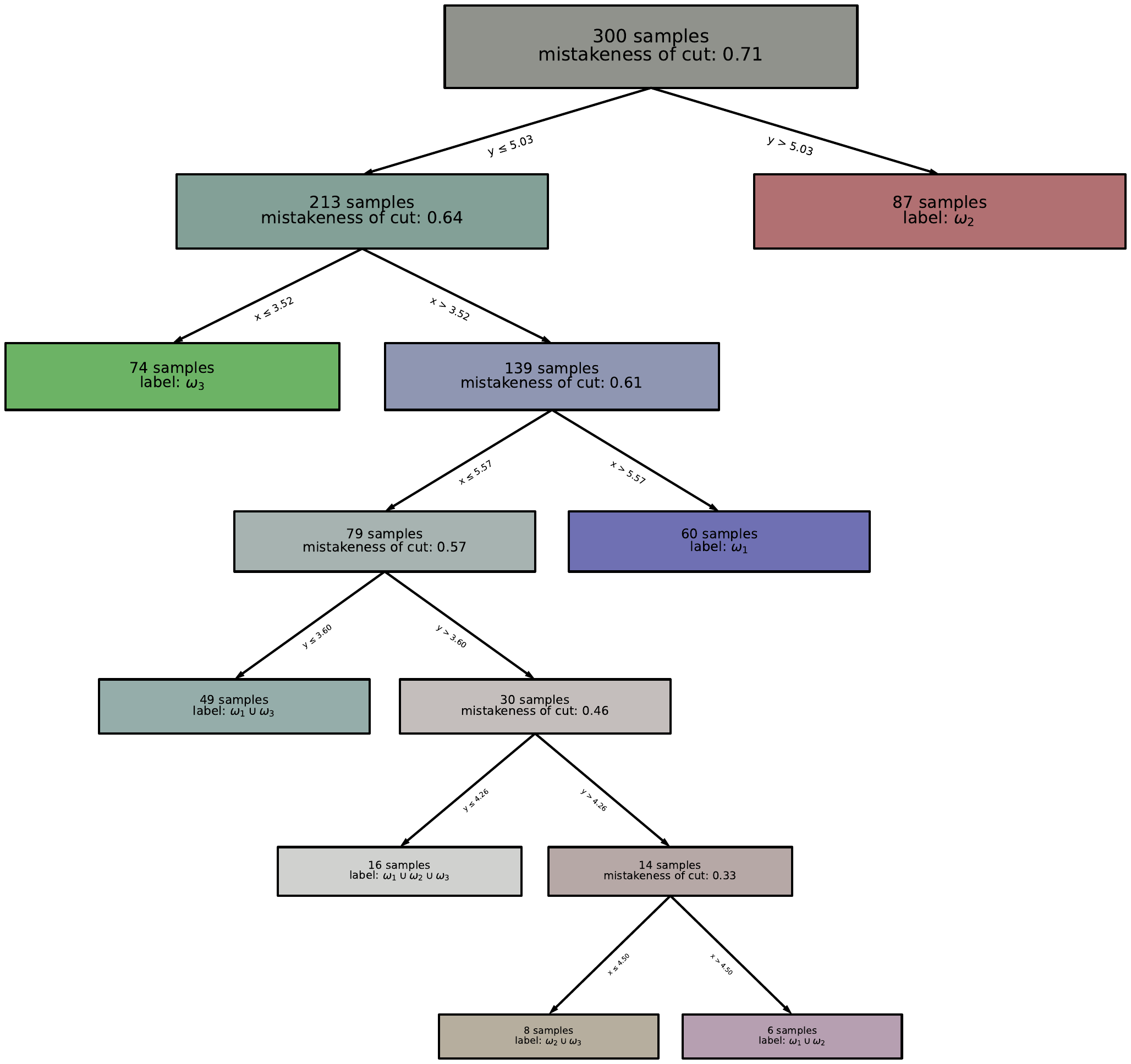}
    \caption{Decision tree obtained with IEMM for the evidential clustering function $\mathcal{M}_\text{full}$ and $\lambda=0$. The decision trees generated by IEMM are shallow by construction, having at most $|\mathbb{F}|-1$ levels. Each non-terminal node indicates the mistakeness of the corresponding split.}
    \label{fig:tree-plot}
    \vspace{-20pt}
\end{wrapfigure}

Then, for each evidential clustering function, we constructed a decision tree using the IEMM and the $\lambda$-evidential mistakeness function for different values of $\lambda$. The partition of the space induced by these explanations is illustrated in Figure \ref{fig:plots}. The decision tree explainer for $\mathcal{M}_\text{full}$ and $\lambda=0$ is represented in Figure \ref{fig:tree-plot}. The resulting explanations for the quasi-bayesian clustering function are shown in Table \ref{tab:paths_qb}, appendix \ref{app:results}.  

Table \ref{tab:results_synthetic} (appendix \ref{app:results}) presents the representativeness achieved in each scenario. 
Notably, the decision tree generated by fixing $\lambda=\infty$ over the $\mathcal{M}_\text{easy}$ dataset achieves the highest representativeness observed, surpassing 93\%. This can be interpreted as the expected explanation accuracy.
To further validate our approach, we evaluated it on larger real-world datasets from the \texttt{credal-datasets-master} repository \cite{hoarauDatasetsRichLabels2023}, with results shown in Table \ref{tab:results} (appendix \ref{app:results}).

Across both synthetic and real-world datasets, decision trees obtained using the $\lambda$-evidential mistakeness function consistently achieve the best performance in terms of $\mathcal{U}^\lambda$-evidential representativeness. The gap between the $\mathcal{U}^\lambda$-evidential representativeness and 1 quantifies the loss of accuracy or cost of the explanation.

\begin{figure*}[t!]
    \centering
    \includegraphics[width=\linewidth]{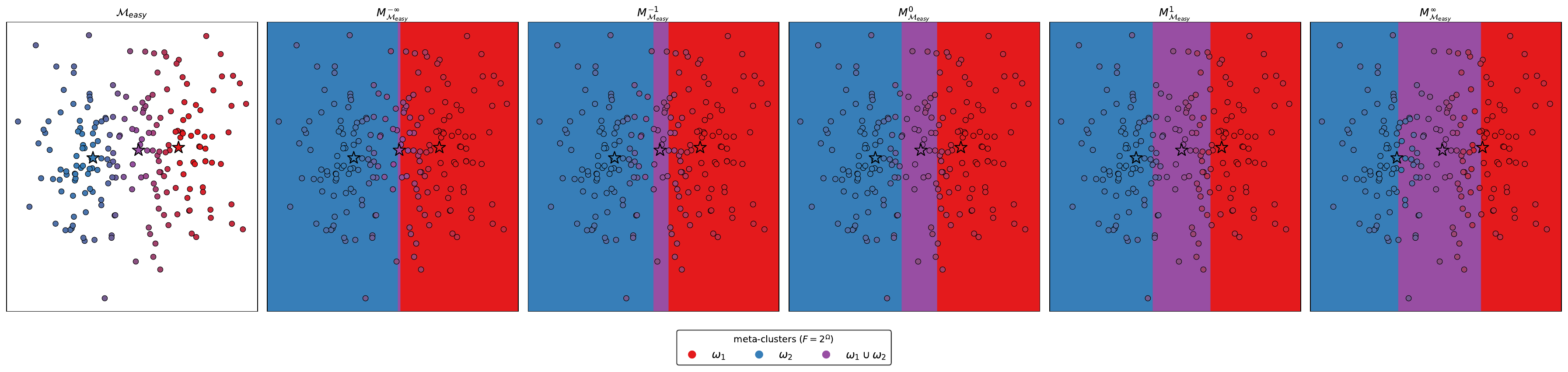}
    \includegraphics[width=\linewidth]{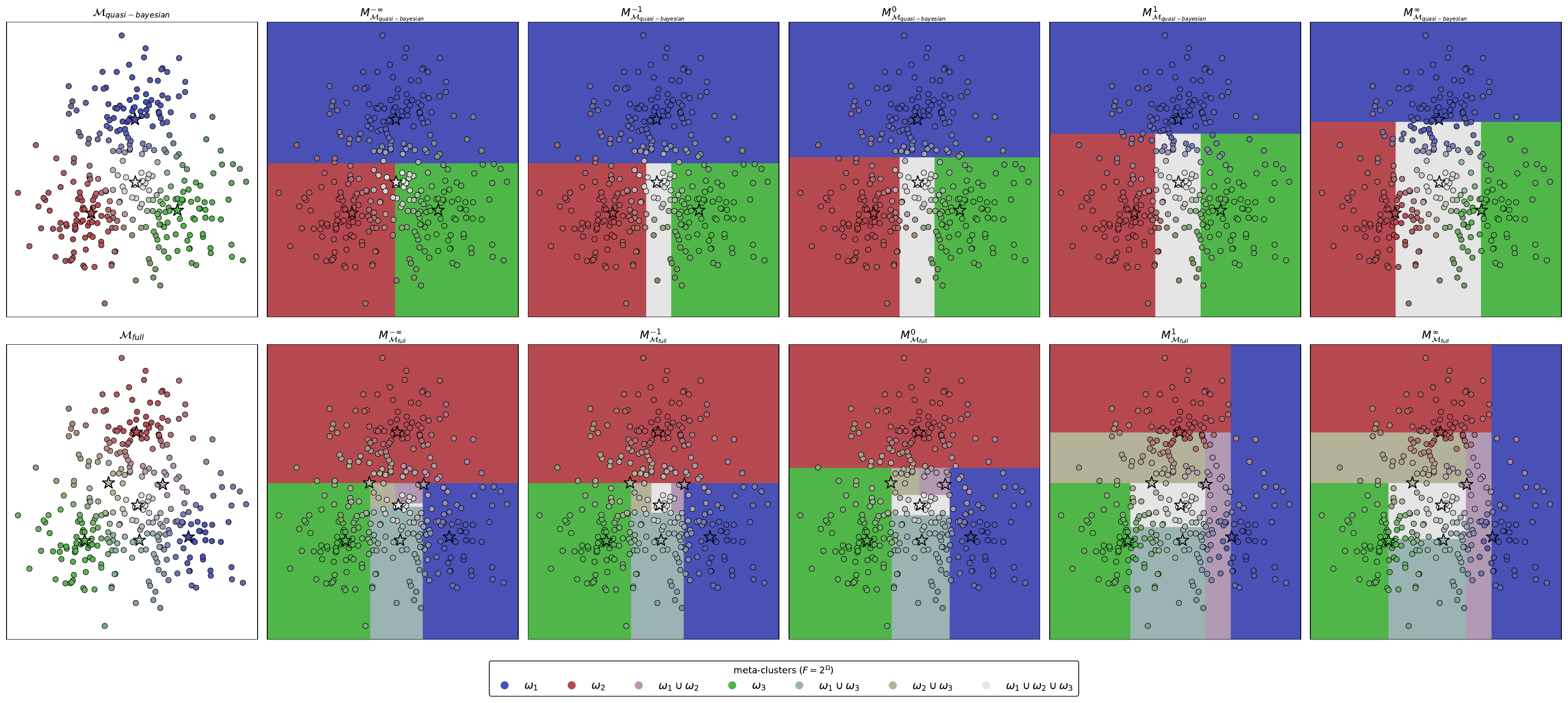}
    \caption{The results of the IEMM on the synthetic dataset for the evidential clustering functions $\mathcal{M}_\text{easy}$, $\mathcal{M}_\text{full}$, $\mathcal{M}_\text{quasi-bayesian}$ and different utility functions. The star represents the centroid of each metacluster. The utility function strongly influences the resulting explanations. The higher the $\lambda$, the more the $\lambda$-evidential mistakeness function assigns larger portions of the space to metaclusters representing doubt. At the limit, $\lambda = -\infty$ (column 2), the obtained explanations give the maximum possible space to the singleton metaclusters. Conversely, $\lambda = \infty$ (column 6) assigns the maximum possible space to the metaclusters representing doubt.}
    \label{fig:plots}
    \vspace{-10pt}
\end{figure*}

\section{Conclusion}\label{sec:conclusion}
In this paper, we presented a novel approach to explainable evidential clustering using decision trees as explainers. We established that representativity constitutes both a necessary and sufficient condition for decision trees to function as abductive explainers, and extended this concept to imprecise settings through the introduction of a utility function. This utility function allows for the accommodation of "tolerable" mistakes in explanations, making it particularly suitable for evidential contexts. Building on these theoretical foundations, we proposed the evidential mistakeness measure and developed the Iterative Evidential Mistake Minimization (IEMM) algorithm. Our approach produces decision trees that effectively explain evidential clustering functions, advancing the development of both cautious and explainable AI systems.

An important consideration regards the expected audience of the explanations our algorithm creates. Our work implicitly assumes that, in the evidential theory framework, decision-makers possess domain expertise, an understanding of the implications of their choices, and knowledge about their risk tolerance preferences. The explanations we generate are designed for these informed stakeholders—individuals familiar with the feature space and its relationships. For example, in clinical applications, our explanations target medical professionals who can appropriately interpret physiological measurements, rather than patients without specialized knowledge.

A notable property of IEMM is the generation of inherently shallow decision trees. Following the IMM design principle, IEMM produces exactly one leaf per cluster, limiting tree depth to at most $|\mathbb{F}|-1$. This structural constraint enhances interpretability—a primary goal of explainable AI—though it may occasionally result in explanations that cannot fully capture complex data patterns, potentially creating overly rigid explainers for certain applications.

Our research opens several promising avenues for future investigation, particularly in two key directions:
\begin{itemize}
    \item \textbf{Domain-specific utility formulations}: While we have proposed a family of natural constructions for utility functions, domain-specific adaptations warrant further exploration. Future research could investigate more complex utility functions or models in which disjoint metaclusters are not treated as completely incompatible, thereby better reflecting the nuanced uncertainty relationships in specialized domains.
    
    \item \textbf{Advanced interpretable evidential classifiers}: Developing more sophisticated interpretable evidential classifiers that maintain or exceed the performance of standard decision trees represents a significant opportunity. Potential approaches include soft decision trees \cite{olaruCompleteFuzzyDecision2003}, fuzzy rule-based learning systems \cite{nozakiAdaptiveFuzzyRulebased1996}, and linguistic \cite{zadehFuzzySets1965} cautious explainers. Additionally, extending these methods to better account for outliers could enhance their robustness in real-world applications.
\end{itemize}

In conclusion, by advancing methods for cautious and explainable clustering, our work contributes to the broader goal of developing AI systems that effectively handle uncertainty while remaining interpretable to human experts. The IEMM algorithm and its theoretical foundations represent a step toward AI systems that acknowledge imperfect information, incorporate domain expertise, and communicate their reasoning in an accessible manner—all key requirements for responsible AI deployment in high-stakes decision-making contexts.

%
\newpage
\bibliographystyle{splncs04}
\bibliography{bib}

\newpage
\appendix
\section{Appendix: On Simplification Explanation Techniques} \label{app:simplification-explanation-techniques}

Simplification techniques typically rely on rule extraction methods, encompassing both global and local approaches. Studies have been conducted to assess the quality of these explanations \cite{amgoudAxiomaticFoundationsExplainability2022}. Below, we introduce definitions that characterize effective explanations produced by simplification techniques.

A \textbf{feature literal} is a pair $\langle\mathcal{A}, v \rangle$ where $\mathcal{A} \in \mathcal{D}$ and $v \in \mathcal{A}$. Let $\mathbf{L}$ be the set of all feature literals for all attributtes. A consistent subset of feature literals is $L \subset \mathbf{L}$ such that $\langle\mathcal{A}, v \rangle, \langle\mathcal{A}, v' \rangle \in L \Rightarrow v = v'$. Let $\mathbf{C} \subseteq 2^{\mathbf{L}}$ be the set of \textbf{all consistent subsets of feature literals}. Each $\mathbf{D} \subset \mathbf{C}$ induces a map $\operatorname{DNF} : \mathbb{X} \rightarrow \{\text{True}, \text{False}\}$ with
\begin{equation}
\operatorname{DNF}_\mathbf{D}(x) = \bigvee_{L \in \mathbf{D}} \left( \bigwedge_{\langle\mathcal{A}, v \rangle \in L}\left(x_\mathcal{A}=v\right)\right)
\label{eq:DNF}
\end{equation}
which is a Disjunctive Normal Form (DNF) \cite{suInterpretableTwolevelBoolean2015}. A DNF can serve as a human-interpretable classification model. When a DNF matches the behavior of a black-box classifier, we achieve a particularly desirable outcome known as an abductive explanation.

\noindent\fbox{%
    \parbox{\textwidth}{%
        \textbf{A Concrete Example:} Consider a philosopher studying living beings who observes two key characteristics: their appearance and their mode of locomotion. To formalize this classification problem, the philosopher defines the feature space of conceivable living beings as $\mathbb{X} = \texttt{App} \times \texttt{Move},$ where
        $$
        \texttt{App} = \{\texttt{feathered}, \texttt{featherless}\} \text{ and } \texttt{Move} = \{\texttt{biped}, \texttt{non-biped}\}.
        $$
        From these features, we can construct the set of feature literals:
        \begin{footnotesize}
        $$\mathbf{L} = \{\langle\texttt{App}, \texttt{feathered}\rangle, \langle\texttt{App}, \texttt{featherless}\rangle, \langle\texttt{Move}, \texttt{biped}\rangle, \langle\texttt{Move}, \texttt{non-biped}\rangle\}.$$ 
        \end{footnotesize}
        The set of all consistent subsets of feature literals encompasses all possible combinations that do not contain contradictory values for the same attribute:
        \begin{footnotesize}
        \begin{align*}
            &\mathbf{C} = \{\emptyset, \\ &\{\langle\texttt{App}, \texttt{feathered}\rangle\}, \{\langle\texttt{App}, \texttt{featherless}\rangle\}, \{\langle\texttt{Move}, \texttt{biped}\rangle\}, \{\langle\texttt{Move}, \texttt{non-biped}\rangle\}, 
            \\ &\{\langle\texttt{App}, \texttt{feathered}\rangle, \langle\texttt{Move}, \texttt{biped}\rangle\}, \{\langle\texttt{App}, \texttt{feathered}\rangle, \langle\texttt{Move}, \texttt{non-biped}\rangle\}, \\ &\{\langle\texttt{App}, \texttt{featherless}\rangle, \langle\texttt{Move}, \texttt{biped}\rangle\}, \{\langle\texttt{App}, \texttt{featherless}\rangle, \langle\texttt{Move}, \texttt{non-biped}\rangle\}\\ &\}.
        \end{align*}
        \end{footnotesize}

        An example of a DNF is given by $\mathbf{D} = \{\{\langle\texttt{App}, \texttt{featherless}\rangle, \langle\texttt{Move}, \texttt{biped}\rangle\}\}$, which corresponds to featherless bipedal beings. That is, for any living being $x$, we have $\operatorname{DNF}_\mathbf{D}(x) = (x_\texttt{App} = \texttt{featherless}) \land (x_\texttt{Move} = \texttt{biped})$. $\operatorname{DNF}_\mathbf{D}(x)$ is true whenever $x$ is a human being.

        Conversely, $\mathbf{D}' = \{\{\langle\texttt{App}, \texttt{feathered}\rangle\}, \{\langle\texttt{Move}, \texttt{non-biped}\rangle\}\}$ corresponds to beings that are either feathered or non-bipedal. In this case, the induced DNF is $\operatorname{DNF}_{\mathbf{D}'}(x) = (x_\texttt{App} = \texttt{feathered}) \lor (x_\texttt{Move} = \texttt{non-biped})$ and $\operatorname{DNF}_{\mathbf{D}'}(x)$ is false whenever $x$ is a human being.
    }%
}

\begin{definition}
  An \textbf{abductive explanation} of the label $\omega \in \Omega$ is a $L \in \mathbf{C}$ such that, $\forall x \in \mathbb{X}$,
  $$
  \left(\bigwedge_{\langle\mathcal{A}, v \rangle \in L}\left(x_\mathcal{A}=v\right)\right) \Rightarrow \Gamma(x) = \omega
  $$
\end{definition}

Abductive explanations were introduced to address the question: "Why is $\Gamma(x) = \omega$?", providing a sufficient reason for characterizing the label $\omega$, where $\Gamma$ is a supervised classifier \cite{ignatievAbductionBasedExplanationsMachine2019}. In the context of explainability, an ideal construction would be a system that can provide satisfactory explanations\footnote{In this work, we consider satisfactory explanations to be "abductive" or "as abductive as possible." However, this might not always be the case. As highlighted in multiple works \cite{barredoarrietaExplainableArtificialIntelligence2020}, the best type of explanation depends on the audience for which this explanation is intended. We develop this discussion further in the conclusion.} for a classifier's outputs.

\begin{definition}
  An \textbf{explainer} of a classifier $\Gamma : \mathbb{X} \rightarrow \Omega$ is a map $\chi_\Gamma : \Omega \rightarrow 2^\mathbf{C}$.
\end{definition}

That is, to each class $\omega$, a classifier associates a DNF. If the DNF issued from $\chi_\Gamma$ matches $\Gamma$, the classifier provides abductive explanations.

\newpage
\section{Appendix: Decision Trees as Explainers} \label{app:dts-as-explainers}
In this section, we provide a brief overview of decision trees (DTs) and their role as explainers. We also establish the relationship between representativity and abductivity in the context of DTs. We adapt the following definition of univariate decision trees from \cite{izzaTacklingExplanationRedundancy2022a}.

\begin{definition}
    The \textbf{graph of a decision tree} $\mathcal{T} = (V, E)$ is a directed acyclic graph in which there is at most one path between any two vertices. The vertex set $V$ is divided into non-terminal vertices $N$ and terminal vertices $T$, such that $V = N \cup T$. Additionally, $\mathcal{T}$ has a unique root vertex, $\operatorname{root}(\mathcal{T}) \in V$, which has no incoming edges, while every other vertex has exactly one incoming edge.

    To each graph of a DT, there are two important associated functions:
    \begin{itemize}
        \item A \textbf{split} is a map $\phi: N \rightarrow \mathcal{D}$ that assigns an attribute to each non-terminal vertex. 
        \item Let $\operatorname{children}(r) = \{s \in V \mid (r, s) \in E\}$ be the set of children of a vertex $r$. A \textbf{decision} is a map $\varepsilon: E \rightarrow \mathbf{L}$ such that, for every non-terminal vertex $r \in N$, there exists a bijection $\varepsilon_r : \operatorname{children}(r) \rightarrow \phi(r)$ satisfying $\varepsilon(r, s) = \langle \phi(r), \varepsilon_r(s) \rangle$.
    \end{itemize}
\end{definition}

It is well known that any binary decision tree can be transformed in linear time into an equivalent disjunctive normal form (DNF) expression \cite{audemardExplanatoryPowerBoolean2022a}. This property is often referenced when DTs are described as "interpretable." With this in mind, we associate each vertex with a path, which serves as the foundation for interpreting a decision tree as an explainer.

For a fixed graph of a DT $\mathcal{T}$, let $\operatorname{DNF}(r)$ be the set of literals associated with the edges that link the root to vertex $r$. All literals in $\operatorname{DNF}(r)$ are consistent \cite{izzaTacklingExplanationRedundancy2022a}. That is, $\operatorname{DNF} : V \rightarrow 2^\mathbf{C}$. Let $\mathbf{D} = \operatorname{DNF}(T)$ be the set of all DNFs associated with terminal vertices.

\begin{definition}
    A \textbf{path} is a map $\Upsilon:\mathbb{X} \rightarrow \mathbf{D}$ such that, for all $x \in \mathbb{X}$, 
    $$\bigwedge_{\langle\mathcal{A}, v \rangle \in \Upsilon(x)} (x_\mathcal{A} = v).$$
\end{definition}

The partitioning nature of decision trees ensures that each path is well-defined, meaning every possible observation follows a unique path. This characteristic allows us to interpret vertices as subsets of the feature space \cite{hoarauEvidentialRandomForests2023}.

\begin{definition}
    A \textbf{node} is a nonempty subset $S \subseteq \mathbb{X}$.
\end{definition}

Every achievable vertex can be trivially associated with a unique node by its DNF. We call \textbf{leaves} the nodes associated with terminal vertices. The set $\mathbf{D}$ can be understood as the explanation for each leaf. Associating leaves with explanations allows us to define the DT as a classifier.

\begin{definition} \label{def:dt}
    A \textbf{decision tree} is a map $\Delta:\mathbb{X} \rightarrow \Omega$ to which a path $\Upsilon_\Delta$ provides an abductive explanation. That is, $\forall x \in \mathbb{X}$, $$\bigwedge_{\langle\mathcal{A}, v \rangle \in \Upsilon(x)} (x_\mathcal{A} = v) \Rightarrow \Delta(x) = \omega.$$
\end{definition}

Let, $\forall \omega \in \Omega$, $\mathcal{L}_\omega^\Delta = \{\Upsilon^{-1}(\{L\}) : L \in \Upsilon(\Delta^{-1}(\{\omega\}))\}$ be the set of all leaves associated with the explanation of $\omega$. The \textbf{DT explainer} $\chi_\Gamma^\Delta$ associated with $\Delta$ is an explainer that, for any label, returns all paths explaining it. That is, $\chi_\Gamma^\Delta(\omega) = \Upsilon_\Delta [\mathcal{L}_\omega^\Delta]=\{\Upsilon_\Delta(x):x\in \mathcal{L}_\omega^\Delta\}$.

Our investigation focuses on the quality of explanations when, in the context of model simplification, the original classifier diverges from the decision tree explaining it. We borrow the concept of representative explainer from \cite{amgoudAxiomaticFoundationsExplainability2022}. A representative explainer is one that, for all observations $x$ with label $\omega = \Gamma(x)$, can provide an explanation $L$ that holds at $x$. That is, there exists a set of literals $L \in \chi_\Gamma(\omega)$ such that $\langle\mathcal{A}, v \rangle \in L \Rightarrow x_\mathcal{A} = v$.

\begin{definition}
  A \textbf{representative explainer} is an explainer $\chi_\Gamma$ such that, $\forall \omega \in \Omega, \ \forall x \in \Gamma^{-1}(\{\omega\})$, $\exists L \in \chi_\Gamma(\omega)$ such that, $\forall \langle\mathcal{A}, v \rangle \in L, x_\mathcal{A} = v$.
\end{definition}

The work in \cite{amgoudAxiomaticFoundationsExplainability2022} proves that every explainer providing abductive explanations is representative. We complement this result by proving that every representative DT explainer provides abductive explanations. Thus, for DT explainers, representativity and abductivity are equivalent.

\begin{theorem}
    If the $\chi_\Gamma^\Delta$ explainer is representative, it provides abductive explanations.
\end{theorem}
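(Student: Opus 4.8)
The plan is to show that \emph{every} literal set in the range of the explainer is an abductive explanation, by exploiting the partitioning structure of the decision tree together with representativity. Concretely, I would fix a class $\omega \in \Omega$ and an arbitrary $L \in \chi_\Gamma^\Delta(\omega)$, and prove that $L$ satisfies the abductive condition: for every $x \in \mathbb{X}$ with $\bigwedge_{\langle\mathcal{A},v\rangle \in L}(x_\mathcal{A}=v)$ one has $\Gamma(x) = \omega$.

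First I would record two structural facts about $\Delta$ that follow directly from the definitions. The first is \textbf{uniqueness of paths}: since each $L \in \chi_\Gamma^\Delta(\omega)$ is by construction the path $\Upsilon_\Delta$ of a point lying in a leaf labeled $\omega$, it is a complete root-to-leaf conjunction containing exactly one literal per split encountered along the way. Hence any $x$ satisfying all of its literals follows precisely that path through the tree, so $\Upsilon_\Delta(x) = L$; this is the formal expression of the stated property that every observation follows a unique path. The second is \textbf{label disjointness}: because $\Delta$ is a function (constant on each leaf, by Definition~\ref{def:dt}), the leaf-sets $\mathcal{L}_\omega^\Delta$ over distinct $\omega$ are disjoint, and since distinct leaves carry distinct paths, $L \in \chi_\Gamma^\Delta(\omega) \cap \chi_\Gamma^\Delta(\omega')$ forces $\omega = \omega'$.

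Then I would run a short contradiction argument driven by representativity. Let $x$ satisfy all literals of $L$ and set $\omega' = \Gamma(x)$. Applying representativity to $x \in \Gamma^{-1}(\{\omega'\})$ yields some $L' \in \chi_\Gamma^\Delta(\omega')$ all of whose literals hold at $x$. Invoking uniqueness of paths for both $L$ and $L'$ gives $\Upsilon_\Delta(x) = L$ and $\Upsilon_\Delta(x) = L'$; since $\Upsilon_\Delta$ is single-valued, $L = L'$. But then $L \in \chi_\Gamma^\Delta(\omega) \cap \chi_\Gamma^\Delta(\omega')$, so label disjointness yields $\omega = \omega'$, i.e. $\Gamma(x) = \omega$. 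As $x$ was arbitrary, $L$ is an abductive explanation of $\omega$, and since $\omega$ and $L$ were arbitrary, the explainer provides abductive explanations.

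I expect the main obstacle to be the careful justification of the uniqueness-of-paths fact—that satisfying all literals of a full root-to-leaf conjunction forces a point onto that very path—since this is where the partition semantics of the tree must be extracted from the edge/split/decision formalism (the bijections $\varepsilon_r$ and the recursive descent). Everything else, namely the disjointness of leaf labels and the one-line contradiction, is routine once representativity is used to produce the competing path $L'$.
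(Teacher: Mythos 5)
Your proof is correct and is essentially the paper's argument in contrapositive form: the paper assumes non-abductivity, locates a point where $\Gamma$ and $\Delta$ disagree, and uses the fact that leaves partition $\mathbb{X}$ (so each $x$ satisfies exactly one leaf path) to contradict representativity, which is precisely the mechanism of your ``competing path'' step. Your version merely runs the argument directly, making explicit the two facts (path uniqueness and label disjointness of $\chi_\Gamma^\Delta$) that the paper's proof uses implicitly.
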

\begin{proof}
    We proceed by contradiction. 
    Assume the DT explainer does not provide abductive explanations. 
    
    From definition \ref{def:dt}, this implies that $\Gamma \neq \Delta$. That is, there exists $x \in \mathbb{X}$ such that $\omega_\Gamma = \Gamma(x) \neq \Delta(x) = \omega_\Delta$. Since the leaves form a partition of the feature space and $x \in \mathcal{L}_{\omega_\Delta}^{\Delta}$, we have $\Upsilon_\Delta(x) \notin \Upsilon_\Delta [\mathcal{L}_{\omega_\Gamma}^{\Delta}]$, and the explainer is not representative. \qed
\end{proof}

\newpage
\section{Appendix: Relating Mistakeness and Representativeness} \label{app:mistakeness-precision}

In this section, we show that the evidential representativeness and the total evidential mistakeness (the sum of the evidential mistakeness of each leaf) are equivalent in terms of measuring the quality of a IMM-like decision tree.

\begin{theorem}
    Let $\Delta, \Delta': \mathbb{X} \rightarrow 2^\Omega$ be two IMM-like decision trees. Then, for any evidential partition $\mathcal{M}$ and utility $\mathcal{U}$, 
\begin{align*}
    \mathcal{R}_{\mathcal{M}, \mathcal{U}}(\Delta) &\geq \mathcal{R}_{\mathcal{M}, \mathcal{U}}(\Delta') \\
    \iff \sum_{A \subset \Omega} \overline{M}_{\mathcal{M}, \mathcal{U}}(\mathcal{L}_A^\Delta) &\leq \sum_{A \subset \Omega} \overline{M}_{\mathcal{M}, \mathcal{U}}(\mathcal{L}_A^{\Delta'}) \\
    \iff \sum_{A \subset \Omega} \underline{M}_{\mathcal{M}, \mathcal{U}}(\mathcal{L}_A^\Delta) &\leq \sum_{A \subset \Omega} \underline{M}_{\mathcal{M}, \mathcal{U}}(\mathcal{L}_A^{\Delta'})
\end{align*}
    where $v_A \in \mathcal{L}_A^\Delta$ which is the leaf associated with the cluster $A$ in the decision tree $\Delta$.
\end{theorem}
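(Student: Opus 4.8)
The plan is to show that \emph{both} mistakeness totals are affine, strictly decreasing functions of $\mathcal{R}_{\mathcal{M}, \mathcal{U}}$, so that all three stated inequalities collapse to the single relation $\mathcal{R}_{\mathcal{M}, \mathcal{U}}(\Delta) \geq \mathcal{R}_{\mathcal{M}, \mathcal{U}}(\Delta')$. The structural fact I would exploit is the IMM-like hypothesis: every leaf contains exactly one centroid, so there are $|\mathbb{F}_{\mathcal{M}}|$ leaves (the sum over $A \subset \Omega$ effectively ranges over focal sets $A \in \mathbb{F}_{\mathcal{M}}$, since only those carry a centroid and a leaf $\mathcal{L}_A^\Delta$), each observation $x$ lies in the unique leaf $\mathcal{L}_{\Delta(x)}^\Delta$, and the centroids lying outside $\mathcal{L}_A^\Delta$ are exactly $\{v_{A'} : A' \in \mathbb{F}_{\mathcal{M}},\ A' \neq A\}$.

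First I would handle $\underline{M}$. Since each leaf holds a single centroid, the normalizing denominator $|\{A : v_A \in S\}|$ equals $1$ and the inner sum over $v_A \in S$ has a single term $A$. Re-indexing the total over leaves as a sum over observations (each $x$ contributing through $\mathcal{L}_{\Delta(x)}^\Delta$) yields
$$\sum_{A \subset \Omega} \underline{M}_{\mathcal{M}, \mathcal{U}}(\mathcal{L}_A^\Delta) = \sum_{x \in X} \sum_{B \in \mathbb{F}_{\mathcal{M}}} \bigl(1 - \mathcal{U}(\Delta(x), B)\bigr)\, m_x(B).$$
Using $\sum_{B \in \mathbb{F}_{\mathcal{M}}} m_x(B) = 1$ (valid because $\emptyset \notin \mathbb{F}_{\mathcal{M}}$), this simplifies to $|X|\bigl(1 - \mathcal{R}_{\mathcal{M}, \mathcal{U}}(\Delta)\bigr)$.

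Next I would treat $\overline{M}$ analogously. Summing over leaves and reindexing by observations gives
$$\sum_{A \subset \Omega} \overline{M}_{\mathcal{M}, \mathcal{U}}(\mathcal{L}_A^\Delta) = \sum_{x \in X} \sum_{\substack{A' \in \mathbb{F}_{\mathcal{M}} \\ A' \neq \Delta(x)}} \sum_{B \in \mathbb{F}_{\mathcal{M}}} \mathcal{U}(A', B)\, m_x(B).$$
The key observation is that completing the inner sum to range over \emph{all} $A' \in \mathbb{F}_{\mathcal{M}}$ produces a quantity $K = \sum_{x \in X} \sum_{A' \in \mathbb{F}_{\mathcal{M}}} \sum_{B \in \mathbb{F}_{\mathcal{M}}} \mathcal{U}(A', B)\, m_x(B)$ that is independent of $\Delta$; subtracting the omitted term $A' = \Delta(x)$ then gives $\sum_{A \subset \Omega} \overline{M}_{\mathcal{M}, \mathcal{U}}(\mathcal{L}_A^\Delta) = K - |X|\,\mathcal{R}_{\mathcal{M}, \mathcal{U}}(\Delta)$.

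Finally, since both totals have the form $(\text{const}) - |X|\,\mathcal{R}_{\mathcal{M}, \mathcal{U}}(\cdot)$ with the same positive coefficient $|X|$, each is a strictly decreasing affine image of the representativeness, and the three inequalities are pairwise equivalent. The main obstacle I anticipate is purely bookkeeping: justifying rigorously that the IMM-like condition forces the one-centroid-per-leaf identities (so the denominator collapses to $1$ and ``$v_{A'} \notin \mathcal{L}_A^\Delta$'' becomes equivalent to ``$A' \neq A$''), and verifying that the $\Delta$-independent constant $K$ genuinely factors out of the $\overline{M}$ computation. Once these identities are secured, the remaining algebra is immediate.
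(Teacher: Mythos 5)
Your proposal is correct and takes essentially the same route as the paper's proof: both arguments use the one-centroid-per-leaf structure to re-index the leaf sums over observations and then exhibit the mistakeness totals as affine, strictly decreasing functions of representativeness, your identities $\sum_{A} \overline{M}_{\mathcal{M}, \mathcal{U}}(\mathcal{L}_A^\Delta) = K - |X|\,\mathcal{R}_{\mathcal{M}, \mathcal{U}}(\Delta)$ and $\sum_{A} \underline{M}_{\mathcal{M}, \mathcal{U}}(\mathcal{L}_A^\Delta) = |X|\bigl(1 - \mathcal{R}_{\mathcal{M}, \mathcal{U}}(\Delta)\bigr)$ being exactly the paper's relations $|X|\,\mathcal{R}_{\mathcal{M}, \mathcal{U}}(\Delta) + \sum_{A} \overline{M}_{\mathcal{M}, \mathcal{U}}(\mathcal{L}_A^\Delta) = \kappa_{\mathcal{M}, \mathcal{U}}$ and $\sum_{A} \underline{M}_{\mathcal{M}, \mathcal{U}}(\mathcal{L}_A^\Delta) - \sum_{A} \overline{M}_{\mathcal{M}, \mathcal{U}}(\mathcal{L}_A^\Delta) = |X| - \kappa_{\mathcal{M}, \mathcal{U}}$ rearranged (with $K = \kappa_{\mathcal{M}, \mathcal{U}}$). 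Your explicit handling of the bookkeeping points (the sum effectively ranging over focal sets with centroids, and $\sum_{B} m_x(B) = 1$ via $\emptyset \notin \mathbb{F}_{\mathcal{M}}$) is a slightly more careful write-up of steps the paper leaves implicit.
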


\begin{proof}

    We start by establishing the relation between the two mistakenness functions. By definition, $x \in \mathcal{L}_A^\Delta \iff \Delta(x) = A$. From equations \eqref{eq:mistakeness_up} and \eqref{eq:mistakeness_down},
\begin{align*}
    \overline{M}_{\mathcal{M}, \mathcal{U}}(\mathcal{L}_A^\Delta) &= \sum_{x \in \mathcal{L}_A^\Delta} \sum_{\Delta(x) \neq C} \sum_{B \in \mathbb{F}_{\mathcal{M}}} \mathcal{U}(C, B)m_x(B), \\
    \underline{M}_{\mathcal{M}, \mathcal{U}}(\mathcal{L}_A^\Delta) &= \sum_{x \in \mathcal{L}_A^\Delta} \sum_{B \in \mathbb{F}_{\mathcal{M}}} (1 - \mathcal{U}(A, B)) m_x(B).
\end{align*} Then,
\begin{align*}
    & \sum_{A \subset \Omega} \underline{M}_{\mathcal{M}, \mathcal{U}}(\mathcal{L}_A^\Delta) - \sum_{A \subset \Omega} \overline{M}_{\mathcal{M}, \mathcal{U}}(\mathcal{L}_A^\Delta) \\
    &= \sum_{A \subset \Omega} \sum_{x \in \mathcal{L}_A^\Delta} \sum_{B \in \mathbb{F}_{\mathcal{M}}} m_x(B) \left((1 - \mathcal{U}(A, B)) - \sum_{A \neq C} \mathcal{U}(C, B)\right) \\
    &= \sum_{x \in X} \sum_{B \in \mathbb{F}_{\mathcal{M}}} m_x(B) \left(1 - \sum_{C \subset \Omega} \mathcal{U}(C, B)\right) \\
    &= |X| - \sum_{x \in X} \sum_{B \in \mathbb{F}_{\mathcal{M}}} m_x(B) \left( \sum_{C \subset \Omega} \mathcal{U}(C, B) \right) = |X| - \kappa_{\mathcal{M}, \mathcal{U}}.
\end{align*}
    where $\kappa_{\mathcal{M}, \mathcal{U}}$ is a constant that depends only on the evidential partition $\mathcal{M}$ and utility $\mathcal{U}$, but not on the specific decision tree $\Delta$.
    
    Also, from equation \eqref{eq:precision_evidential}, 
    $$
    |X| \mathcal{R}_{\mathcal{M}, \mathcal{U}}(\Delta) = \sum_{A \subset \Omega} \sum_{x \in \mathcal{L}_A^\Delta} \sum_{B \in \mathbb{F}_{\mathcal{M}}} \mathcal{U}(\Delta(x), B) m_x(B).
    $$
    Similarly,
\begin{align*}
    &|X| \mathcal{R}_{\mathcal{M}, \mathcal{U}}(\Delta) + \sum_{A \subset \Omega} \overline{M}_{\mathcal{M}, \mathcal{U}}(\mathcal{L}_A^\Delta) = \\
    &\sum_{A \subset \Omega} \sum_{x \in \mathcal{L}_A^\Delta} \sum_{B \in \mathbb{F}_{\mathcal{M}}} m_x(B) \left(\mathcal{U}(A, B) + \sum_{A \neq C} \mathcal{U}(C, B)\right) = \kappa_{\mathcal{M}, \mathcal{U}}.
\end{align*}
    
Since all three measures are related by affine transformations with the same constant terms, they preserve the same ordering relationships between different decision trees. Therefore, comparing two trees $\Delta$ and $\Delta'$ using any of these measures yields equivalent results, proving the stated equivalences. \qed
\end{proof}

\newpage
\section{Appendix: Results} \label{app:results}

In this section, we present the results of our experiments on synthetic and real-world datasets as described in section \ref{sec:algorithm}.

\begin{table}[h]
    \vspace{-10pt}
    \centering
    \resizebox{\linewidth}{!}{%
    \begin{tabular}{|c|c|c|c|c|}
\toprule
\hline
 & $\omega_{2}$ & $\omega_{1} \cup \omega_{2} \cup \omega_{3}$ & $\omega_{3}$ & $\omega_{1}$ \\
\midrule
\hline
$M^{-\infty}_{\mathcal{M}_{q-bay}}$ & $(y \leq 4.54) \wedge (x \leq 4.43)$ & $(y \leq 4.54) \wedge (x > 4.43) \wedge (x \leq 4.48)$ & $(y \leq 4.54) \wedge (x > 4.48)$ & $(y > 4.54)$ \\
$M^{-1}_{\mathcal{M}_{q-bay}}$ & $(y \leq 4.54) \wedge (x \leq 4.08)$ & $(y \leq 4.54) \wedge (x > 4.08) \wedge (x \leq 4.98)$ & $(y \leq 4.54) \wedge (x > 4.98)$ & $(y > 4.54)$ \\
$M^{0}_{\mathcal{M}_{q-bay}}$ & $(y \leq 4.69) \wedge (x \leq 3.85)$ & $(y \leq 4.69) \wedge (x > 3.85) \wedge (x \leq 5.09)$ & $(y \leq 4.69) \wedge (x > 5.09)$ & $(y > 4.69)$ \\
$M^{1}_{\mathcal{M}_{q-bay}}$ & $(y \leq 5.39) \wedge (x \leq 3.68)$ & $(y \leq 5.39) \wedge (x \leq 5.28) \wedge (x > 3.68)$ & $(y \leq 5.39) \wedge (x > 5.28)$ & $(y > 5.39)$ \\
$M^\infty_{\mathcal{M}_{q-bay}}$ & $(y \leq 5.82) \wedge (x \leq 2.95)$ & $(y \leq 5.82) \wedge (x \leq 5.95) \wedge (x > 2.95)$ & $(y \leq 5.82) \wedge (x > 5.95)$ & $(y > 5.82)$ \\
\hline
\bottomrule
\end{tabular}

    }
    \vspace{10pt}
    \caption{Abductive explanations generated by IEMM for all clusters of the quasi-bayesian evidential clustering function. Higher $\lambda$ values result in larger portions of the feature space being attributed to the cautious metacluster $\omega_1 \cup \omega_2 \cup \omega_3$.}
    \label{tab:paths_qb}
    \vspace{-30pt}
\end{table}

\begin{table}[h]
    \centering
    \resizebox{0.6\linewidth}{!}{%
    \begin{tabular}{|l|c|c|c|c|c|}
\toprule
\hline
 & $\mathcal{R}_{\mathcal{M}_{easy}, \mathcal{U}^{-\infty}}$ & $\mathcal{R}_{\mathcal{M}_{easy}, \mathcal{U}^{-1}}$ & $\mathcal{R}_{\mathcal{M}_{easy}, \mathcal{U}^{0}}$ & $\mathcal{R}_{\mathcal{M}_{easy}, \mathcal{U}^{1}}$ & $\mathcal{R}_{\mathcal{M}_{easy}, \mathcal{U}^{\infty}}$ \\
\hline
\midrule
$M^{-\infty}_{\mathcal{M}_{easy}}$ & \textbf{0.915796} & 0.808588 & 0.701380 & 0.701452 & 0.701524 \\
$M^{-1}_{\mathcal{M}_{easy}}$ & 0.901122 & \textbf{0.819012} & 0.736903 & 0.749377 & 0.761850 \\
$M^{0}_{\mathcal{M}_{easy}}$ & 0.876733 & 0.813867 & \textbf{0.751002} & 0.781731 & 0.812461 \\
$M^{1}_{\mathcal{M}_{easy}}$ & 0.781247 & 0.751198 & 0.721149 & \textbf{0.811562} & 0.901975 \\
$M^{\infty}_{\mathcal{M}_{easy}}$ & 0.689432 & 0.669249 & 0.649067 & 0.789613 & \textbf{0.930160} \\
\hline
\bottomrule
\end{tabular}

    }
    
    \resizebox{0.6\linewidth}{!}{%
    \begin{tabular}{|l|c|c|c|c|c|}
\toprule
\hline
 & $\mathcal{R}_{\mathcal{M}_{full}, \mathcal{U}^{-\infty}}$ & $\mathcal{R}_{\mathcal{M}_{full}, \mathcal{U}^{-1}}$ & $\mathcal{R}_{\mathcal{M}_{full}, \mathcal{U}^{0}}$ & $\mathcal{R}_{\mathcal{M}_{full}, \mathcal{U}^{1}}$ & $\mathcal{R}_{\mathcal{M}_{full}, \mathcal{U}^{\infty}}$ \\
\hline
\midrule
$M^{-\infty}_{\mathcal{M}_{full}}$ & \textbf{0.882009} & 0.731303 & 0.575128 & 0.593766 & 0.612354 \\
$M^{-1}_{\mathcal{M}_{full}}$ & 0.881689 & 0.738726 & 0.598038 & 0.618498 & 0.638218 \\
$M^{0}_{\mathcal{M}_{full}}$ & 0.867413 & \textbf{0.745508} & \textbf{0.625343} & 0.656206 & 0.683719 \\
$M^{1}_{\mathcal{M}_{full}}$ & 0.642447 & 0.596290 & 0.542490 & \textbf{0.681838} & 0.809441 \\
$M^{\infty}_{\mathcal{M}_{full}}$ & 0.621851 & 0.577133 & 0.526137 & 0.679054 & \textbf{0.818054} \\
\hline
\bottomrule
\end{tabular}

    }
    
    \resizebox{0.6\linewidth}{!}{%
    \begin{tabular}{|l|c|c|c|c|c|}
\toprule
\hline
 & $\mathcal{R}_{\mathcal{M}_{q-bay}, \mathcal{U}^{-\infty}}$ & $\mathcal{R}_{\mathcal{M}_{q-bay}, \mathcal{U}^{-1}}$ & $\mathcal{R}_{\mathcal{M}_{q-bay}, \mathcal{U}^{0}}$ & $\mathcal{R}_{\mathcal{M}_{q-bay}, \mathcal{U}^{1}}$ & $\mathcal{R}_{\mathcal{M}_{q-bay}, \mathcal{U}^{\infty}}$ \\
\hline
\midrule
$M^{-\infty}_{\mathcal{M}_{q-bay}}$ & \textbf{0.887444} & 0.781227 & 0.728118 & 0.728120 & 0.728125 \\
$M^{-1}_{\mathcal{M}_{q-bay}}$ & 0.872286 & 0.799687 & 0.763387 & 0.772136 & 0.789634 \\
$M^{0}_{\mathcal{M}_{q-bay}}$ & 0.866938 & \textbf{0.804483} & \textbf{0.773256} & \textbf{0.785821} & 0.810953 \\
$M^{1}_{\mathcal{M}_{q-bay}}$ & 0.802770 & 0.761888 & 0.741446 & 0.778781 & 0.853452 \\
$M^{\infty}_{\mathcal{M}_{q-bay}}$ & 0.638852 & 0.617576 & 0.606938 & 0.708914 & \textbf{0.912866} \\
\hline
\bottomrule
\end{tabular}

    }
    \vspace{10pt}
    \caption{Evaluation of the resulting explanations for each metacluster of the synthetic datasets. Each line corresponds to a decision tree trained with one specific mistakeness. Each column corresponds to the $\mathcal{U}$-evidential representativeness of the decision tree. In bold, the best decision tree for each representativeness. We can see that decision trees trained with $\lambda$-evidential mistakeness function tend to be the best in terms of $\mathcal{U}^\lambda$-evidential representativeness.}
    \label{tab:results_synthetic}
\end{table}

\begin{table}[h]
    \centering
    \resizebox{0.6\linewidth}{!}{%
    \begin{tabular}{|l|c|c|c|c|c|}
\toprule
\hline
 & $\mathcal{R}_{\mathcal{M}_{CB-2}, \mathcal{U}^{-\infty}}$ & $\mathcal{R}_{\mathcal{M}_{CB-2}, \mathcal{U}^{-1}}$ & $\mathcal{R}_{\mathcal{M}_{CB-2}, \mathcal{U}^{0}}$ & $\mathcal{R}_{\mathcal{M}_{CB-2}, \mathcal{U}^{1}}$ & $\mathcal{R}_{\mathcal{M}_{CB-2}, \mathcal{U}^{\infty}}$ \\
\hline
\midrule
$M^{-\infty}_{\mathcal{M}_{CB-2}}$ & \textbf{\textbf{0.864286}} & 0.658929 & 0.453571 & 0.458929 & 0.464286 \\
$M^{-1}_{\mathcal{M}_{CB-2}}$ & \textbf{\textbf{0.864286}} & 0.667857 & 0.471429 & 0.480357 & 0.489286 \\
$M^{0}_{\mathcal{M}_{CB-2}}$ & 0.750000 & \textbf{0.669643} & \textbf{0.589286} & 0.694643 & 0.800000 \\
$M^{1}_{\mathcal{M}_{CB-2}}$ & 0.714286 & 0.651786 & \textbf{0.589286} & \textbf{0.726786} & \textbf{\textbf{0.864286}} \\
$M^{\infty}_{\mathcal{M}_{CB-2}}$ & 0.714286 & 0.651786 & \textbf{0.589286} & \textbf{0.726786} & \textbf{\textbf{0.864286}} \\
\hline
\bottomrule
\end{tabular}

    }
    
    \resizebox{0.6\linewidth}{!}{%
    \begin{tabular}{|l|c|c|c|c|c|}
\toprule
\hline
 & $\mathcal{R}_{\mathcal{M}_{CD-2}, \mathcal{U}^{-\infty}}$ & $\mathcal{R}_{\mathcal{M}_{CD-2}, \mathcal{U}^{-1}}$ & $\mathcal{R}_{\mathcal{M}_{CD-2}, \mathcal{U}^{0}}$ & $\mathcal{R}_{\mathcal{M}_{CD-2}, \mathcal{U}^{1}}$ & $\mathcal{R}_{\mathcal{M}_{CD-2}, \mathcal{U}^{\infty}}$ \\
\hline
\midrule
$M^{-\infty}_{\mathcal{M}_{CD-2}}$ & \textbf{0.913571} & 0.780357 & 0.647143 & 0.647857 & 0.648571 \\
$M^{-1}_{\mathcal{M}_{CD-2}}$ & \textbf{0.913571} & \textbf{0.783571} & \textbf{0.653571} & 0.656071 & 0.658571 \\
$M^{0}_{\mathcal{M}_{CD-2}}$ & \textbf{0.913571} & \textbf{0.783571} & \textbf{0.653571} & 0.656071 & 0.658571 \\
$M^{1}_{\mathcal{M}_{CD-2}}$ & 0.775714 & 0.682500 & 0.589286 & \textbf{0.677500} & 0.765714 \\
$M^{\infty}_{\mathcal{M}_{CD-2}}$ & 0.661429 & 0.575000 & 0.488571 & 0.632500 & \textbf{0.776429} \\
\hline
\bottomrule
\end{tabular}

    }
    
    \resizebox{0.6\linewidth}{!}{%
    \begin{tabular}{|l|c|c|c|c|c|}
\toprule
\hline
 & $\mathcal{R}_{\mathcal{M}_{CD-4}, \mathcal{U}^{-\infty}}$ & $\mathcal{R}_{\mathcal{M}_{CD-4}, \mathcal{U}^{-1}}$ & $\mathcal{R}_{\mathcal{M}_{CD-4}, \mathcal{U}^{0}}$ & $\mathcal{R}_{\mathcal{M}_{CD-4}, \mathcal{U}^{1}}$ & $\mathcal{R}_{\mathcal{M}_{CD-4}, \mathcal{U}^{\infty}}$ \\
\hline
\midrule
$M^{-\infty}_{\mathcal{M}_{CD-4}}$ & \textbf{0.696531} & 0.523750 & 0.416378 & 0.441318 & 0.470153 \\
$M^{-1}_{\mathcal{M}_{CD-4}}$ & 0.694184 & \textbf{0.526565} & \textbf{0.422959} & 0.454137 & 0.498980 \\
$M^{0}_{\mathcal{M}_{CD-4}}$ & 0.653622 & 0.503031 & 0.414898 & \textbf{0.458236} & 0.537194 \\
$M^{1}_{\mathcal{M}_{CD-4}}$ & 0.467959 & 0.345948 & 0.242449 & 0.378104 & 0.543163 \\
$M^{\infty}_{\mathcal{M}_{CD-4}}$ & 0.421224 & 0.320476 & 0.236990 & 0.388116 & \textbf{0.601071} \\
\hline
\bottomrule
\end{tabular}

    }
    \vspace{10pt}
    \caption{Evaluation of the resulting explanations for each metacluster of the real-world datasets. Each line corresponds to a decision tree trained with one specific mistakeness. We implemented the IEMM algorithm over the datasets \texttt{Credal\_Bird-2} ($\mathcal{M}_{CB-2}$ with 2 classes), \texttt{Credal\_Dog-2} ($\mathcal{M}_{CD-2}$ with 2 classes) and \texttt{Credal\_Dog-4} ($\mathcal{M}_{CD-4}$ with 4 classes). Each column corresponds to the $\mathcal{U}$-evidential representativeness of the decision tree. In bold, the best decision tree for each representativeness. We can see that decision trees trained with $\lambda$-evidential mistakeness function tend to be the best in terms of $\mathcal{U}^\lambda$-evidential representativeness.}
    \label{tab:results}
\end{table}

\end{document}